\newtheorem{definition}{Definition}[section]
\DeclareMathOperator*{\argmin}{arg\,min}
\newtheorem{prop}{Proposition}[section]
\newtheorem{lemma}{Lemma}[section]
\newcolumntype{L}[1]{>{\raggedright\let\newline\\\arraybackslash\hspace{0pt}}m{#1}}
\newcolumntype{C}[1]{>{\centering\let\newline\\\arraybackslash\hspace{0pt}}m{#1}}
\newcolumntype{R}[1]{>{\raggedleft\let\newline\\\arraybackslash\hspace{0pt}}m{#1}}
\newcommand{\xpi}[1]{#1_{p_i}}
\newcommand{\xhj}[1]{#1_{h_j}}
\newcommand{\msk}{f^M_{h_j}}
\newcommand{\fst}{f^F_{h_j}}
\newcommand{\ft}{f^T_{\p}}
\newcommand{\p}{p_i}
\newcommand{\h}{h_j}
\newcommand{\ith}{i^{\textrm{th}}}
\newcommand{\jth}{j^{\textrm{th}}}
\newcommand{\tim}[1]{\textrm{#1}}
\newcommand{\mc}[1]{\mathcal{#1}}
\newcommand{\bb}[1]{\mathbb{#1}}
\begin{document}
\title{DensePeds: Pedestrian Tracking in Dense Crowds Using Front-RVO and Sparse Features}

\author{
Rohan Chandra$^1$, Uttaran Bhattacharya$^1$, Aniket Bera$^2$, and Dinesh Manocha$^1$
\\
$^1$University of Maryland, $^2$University of North Carolina
}
\maketitle

\begin{abstract}
We present a pedestrian tracking algorithm, \textit{DensePeds}, that tracks individuals in highly dense crowds (\textgreater 2 pedestrians per square meter). Our approach is designed for videos captured from front-facing or elevated cameras. We present a new motion model called Front-RVO (FRVO) for predicting pedestrian movements in dense situations using collision avoidance constraints and combine it with state-of-the-art Mask R-CNN to compute sparse feature vectors that reduce the loss of pedestrian tracks (false negatives). We evaluate DensePeds on the standard MOT benchmarks as well as a new dense crowd dataset. In practice, our approach is 4.5$\times$ faster than prior tracking algorithms on the MOT benchmark and we are state-of-the-art in dense crowd videos by over 2.6\% on the absolute scale on average.

\end{abstract}

\section{Introduction}
Pedestrian tracking is the problem of maintaining the consistency in the temporal and spatial identity of a person in an image sequence or a crowd video. This is an important problem that helps us not only extract trajectory information from a crowd scene video but also helps us understand high-level pedestrian behaviors \cite{bera2017aggressive}. Many applications in robotics and computer vision such as action recognition and collision-free navigation and trajectory prediction~\cite{traPHic} require tracking algorithms to work accurately in real time \cite{teichman2011practical}.
Furthermore, it is crucial to develop general-purpose algorithms that can handle front-facing cameras (used for robot navigation or autonomous driving) as well as elevated cameras (used for urban surveillance), especially in densely crowded areas such as airports, railway terminals, or shopping complexes.

Closely related to pedestrian tracking is pedestrian detection, which is the problem of detecting multiple individuals in each frame of a video. Pedestrian detection has received a lot of traction and gained significant progress in recent years. However, earlier work in pedestrian tracking
\cite{zhang2013structure,hu2012single} did not include pedestrian detection. These tracking methods require manual, near-optimal initialization of each pedestrian's state information in the first video frame. 
Further, sans-detection methods need to know the number of pedestrians in each frame apriori, so they do not handle cases in which new pedestrians enter the scene during the video. Tracking by detection overcomes these limitations by employing a detection framework to recognize pedestrians entering at any point of time during the video and automatically initialize their state-space information.

However, tracking pedestrians in dense crowd videos where there are 2 or more pedestrians per square meter remains a challenge for the tracking-by-detection literature. These videos suffer from severe occlusion, mainly due to the pedestrians walking extremely close to each other and frequently crossing each other's paths. This makes it difficult to track each pedestrian across the video frames. Tracking-by-detection algorithms compute a bounding box around each pedestrian. Because of their proximity in dense crowds, the bounding boxes of nearby pedestrians overlap which affects the accuracy of tracking algorithms.

\begin{figure}
    \centering
    \includegraphics[width=\columnwidth]{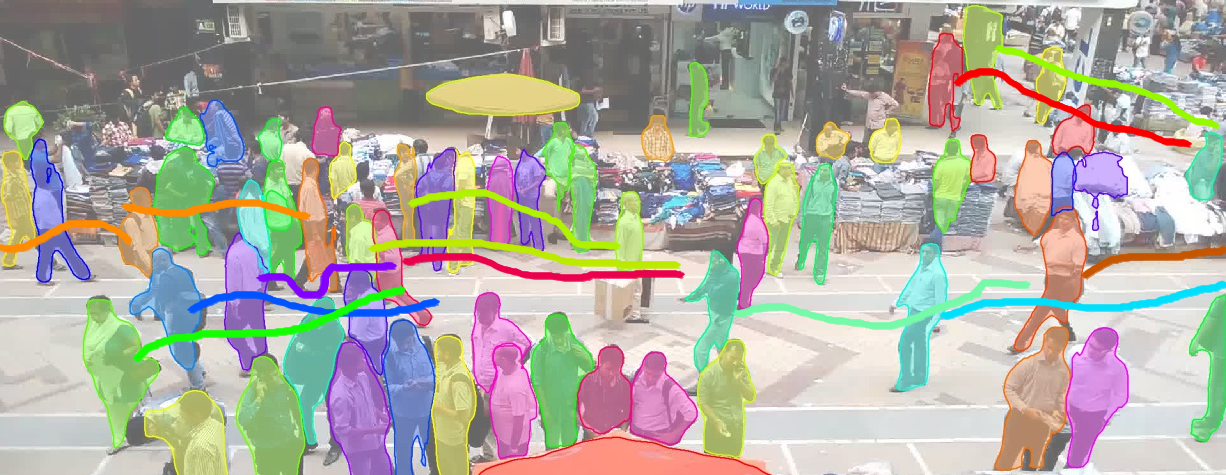}
    \caption{Performance of our pedestrian tracking algorithm on the NPLACE-1 sequence with over 80 pedestrians per frame. The colored tracks associated with each moving pedestrian are linked to a unique ID. DensePeds achieves an accuracy of up to 85.5\% on our new dense crowd dataset and improves tracking accuracy by 2.6\% over state of the art methods on the average. This is equivalent to an average rank difference of 17 on the MOT benchmark.}
    \label{cover}
\end{figure}

\textbf{Main Contributions.} Our main contributions in this work are threefold.
\begin{enumerate}
    \item We present a pedestrian tracking algorithm called DensePeds that can efficiently track pedestrians in crowds with 2 or more pedestrians per square meter. We call such crowds ``dense''. We introduce a new motion model called Frontal Reciprocal Velocity Obstacles (FRVO), which extends traditional RVO \cite{van2011reciprocal} to work with front- or elevated-view cameras. FRVO uses an elliptical approximation for each pedestrian and estimates pedestrian dynamics (position and velocity) in dense crowds by considering intermediate goals and collision avoidance constraints.
    
    \item We combine FRVO with the state-of-the-art Mask R-CNN object detector~\cite{maskrcnn} to compute sparse feature vectors. We show analytically that our sparse feature vector computation reduces the probability of the loss of pedestrian tracks (false negatives) in dense scenarios. We also show by experimentation that using sparse feature vectors makes our method outperform state-of-the-art tracking methods on dense crowd videos, without any additional requirement for training or optimization.
    
    \item We make available a new dataset consisting of dense crowd videos. Available benchmark crowd videos are extremely sparse (less than 0.2 persons per square meter). Our dataset, on the other hand, presents a more challenging and realistic view of crowds in public areas in dense metropolitans.
\end{enumerate}

On our dense crowd videos dataset, our method outperforms the state-of-the-art, by 2.6\% and reduce false negatives by 11\% on our dataset. We also validate the benefits of FRVO and sparse feature vectors through ablation experiments on our dataset. Lastly, we evaluate and compare the performance of DensePeds with state-of-the-art online methods on the MOT benchmark \cite{mot16} for a comprehensive evaluation. The benefits of our method do not hold for sparsely crowded videos such as the ones in the MOT benchmark since FRVO is based on reciprocal velocities of two colliding pedestrians. Unsurprisingly, we do not outperform the state-of-the-art methods on the MOT benchmark since the MOT sequences do not contain many colliding pedestrians, thereby rendering FRVO ineffective. Nevertheless, our method still has the lowest false negatives among all the available methods. Its overall performance is state of the art on dense videos and is in the top 13\% of all published methods on the MOT15 and top 20\% on MOT16.


\section{Related Work}\label{Sec2}
The most relevant prior work for our work is on object detection, pedestrian tracking, and motion models used in pedestrian tracking, which we present below.
\subsection{Object Detection}
Early methods for object detection include HOG \cite{dalal2005histograms} and SIFT \cite{lowe2004distinctive}, which manually extract features from images. Inspired by AlexNet, \cite{2013arXiv1311.2524G} proposed R-CNN and its variants~\cite{fastrcnn,fasterrcnn} for optimizing the object detection problem by incorporating a selective search. 

More recently, the prevalence of CNNs has led to the development of Mask R-CNN \cite{maskrcnn}, which extends Faster R-CNN to include pixel-level segmentation. The use of Mask R-CNN in pedestrian tracking has been limited, although it has been used for other pedestrian-related problems such as pose estimation \cite{li2018pose2seg}.


\subsection{Pedestrian Tracking}
There have been considerable advances in object detection since the advent of to deep learning, which has led to substantial research in the intersection of deep learning and the tracking-by-detection paradigm \cite{rtdl1,rtdl2,rtdl4,rtdl5-online153}. However, \cite{rtdl5-online153,rtdl2} operate at less than one fps on a GPU and have low accuracy on standard benchmarks while \cite{rtdl1} sacrifices accuracy to increase tracking speed. Finally, deep learning methods require expensive computation, often prohibiting real-time performance with inexpensive computing resources. Some recent methods~\cite{deepsort,rt1} achieve high accuracy but may require high-quality, heavily optimized detection features for good performance. For an up-to-date review of tracking-by-detection algorithms, we refer the reader to methods submitted to the MOT \cite{mot16} benchmark.

\subsection{Motion Models in Pedestrian Tracking}
Motion models are commonly used in pedestrian tracking algorithms to improve tracking accuracy \cite{cem-lin2,mht-lin1,edmt-lin3,lfnf-lin4}. \cite{mht-lin1} presents a variation of MHT~\cite{mht} and shows that it is at par with the state-of-the-art from the tracking-by-detection paradigm. \cite{lfnf-lin4} uses a motion model to combine fragmented pedestrian tracks caused by occlusion. These methods are based on linear constant velocity or acceleration models. Such linear models, however, cannot characterize pedestrian dynamics in dense crowds~\cite{bera2014realtime}. RVO \cite{van2011reciprocal} is a non-linear motion model that has been used for pedestrian tracking in dense videos to compute intermediate goal locations, but it only works with top-facing videos and circular pedestrian representations. RVO has been extended to tracking road-agents such as cars, buses, and two-wheelers, in addition to pedestrians, using a linear runtime motion model that considers both collision avoidance and pair-wise heterogeneous interactions between road-agents~\cite{chandra2019roadtrack}. Other motion models used in pedestrian tracking are the Social Force model \cite{bera2014adapt}, LTA \cite{5459260}, and ATTR \cite{yamaguchi2011you}.

There are also many discrete motion models that represent each individual or pedestrian in a crowd as a particle (or as a 2D circle on a plane) to model the interactions. These include models based on repulsive forces~\cite{helbing1995social} and velocity-based optimization algorithms~\cite{karamouzas2009predictive}, ~\cite{van2011reciprocal}. More recent discrete approaches are based on short-term planning using a discrete approach~\cite{antonini2006behavioral} and cognitive models~\cite{chung2010mobile}. However, these methods are based on circular agent representation and do not work well for front-facing pedestrians in dense crowd videos as they are overly conservative in terms of motion prediction.



\section{FRVO: Local Trajectory Prediction}
\label{Sec3}



When walking in dense crowds, pedestrians keep changing their velocities frequently to avoid collisions with other pedestrians. Pedestrians also exhibit local interactions and other collision-avoidance behaviors such as side-stepping, shoulder-turning, and backpedaling~\cite{best2016real}. 
As a result, prior motion models with constant velocity or constant acceleration assumptions do not accurately model crowded scenarios. Further, defining an accurate motion model gets even more challenging in front-view videos due to occlusion and proximity issues. For example, in top-view videos, one uses circular representations to model the shape of the heads of pedestrians. It is, generally, physically impossible for two individual's heads to occlude or occupy the same space. In front facing crowd videos, however, occlusion is a common problem. In this section, we introduce our non-linear motion model, Frontal RVO (FRVO), which is designed to work with crowd videos captured using front- or elevated-view cameras.


\subsection{Notations}
Table~\ref{tab:notations} lists the notations used in this paper. A few important points to keep in mind are:
\begin{itemize}
    \item While $\h$ denotes a detected pedestrian, $\xhj{f}$ denotes the feature vector extracted from the \textit{segmented box} of $\h$.
    \item $\mc{H}_i = \{ \h \ : \ |\p - \h| \leq \rho, \p \in \mc{P}\}$
\end{itemize}
\begin{table}[t]
    \centering
    \begin{tabular}{|R{1.3cm}|L{6.4cm}|}
        \hline
        $\p$ & $\ith$ pedestrian \\
        \hline
        $\h$ & $\jth$ detected pedestrian \\
        \hline
        $\xpi{f}$ \& $\xhj{f}$ & feature vectors corresponding to $\p$ and $\h$ respectively \\
        \hline
        $T$ & total number of frames in the video \\
        \hline
        $\mc{H}$ \& $\mc{P}$ & sets of all pedestrian detections and total pedestrians in a frame respectively \\
        \hline
        $\rho$ & predefined radius around every $\p$ \\
        \hline
        $\mc{H}_i$ & set of all detected persons within a circle of radius $r$ centered around $\p$ \\
        \hline
        $N_i$ & cardinality of the set $\mc{H}_i$ \\
        \hline
        $\mathnormal{d}(a,b)$ & cosine distance metric between the vectors $a$ and $b$, defined as:  $\mathnormal{d}(a,b) = 1-\frac{a^\top b}{\lVert a \rVert\lVert b \rVert}$ \\
        \hline
    \end{tabular}
    \caption{Notations we use in this paper.}
    \label{tab:notations}
\end{table}



\subsection{Analytical Comparison with RVO} \label{Sec3.2}
\begin{figure}
    \centering
    \includegraphics[width=\columnwidth]{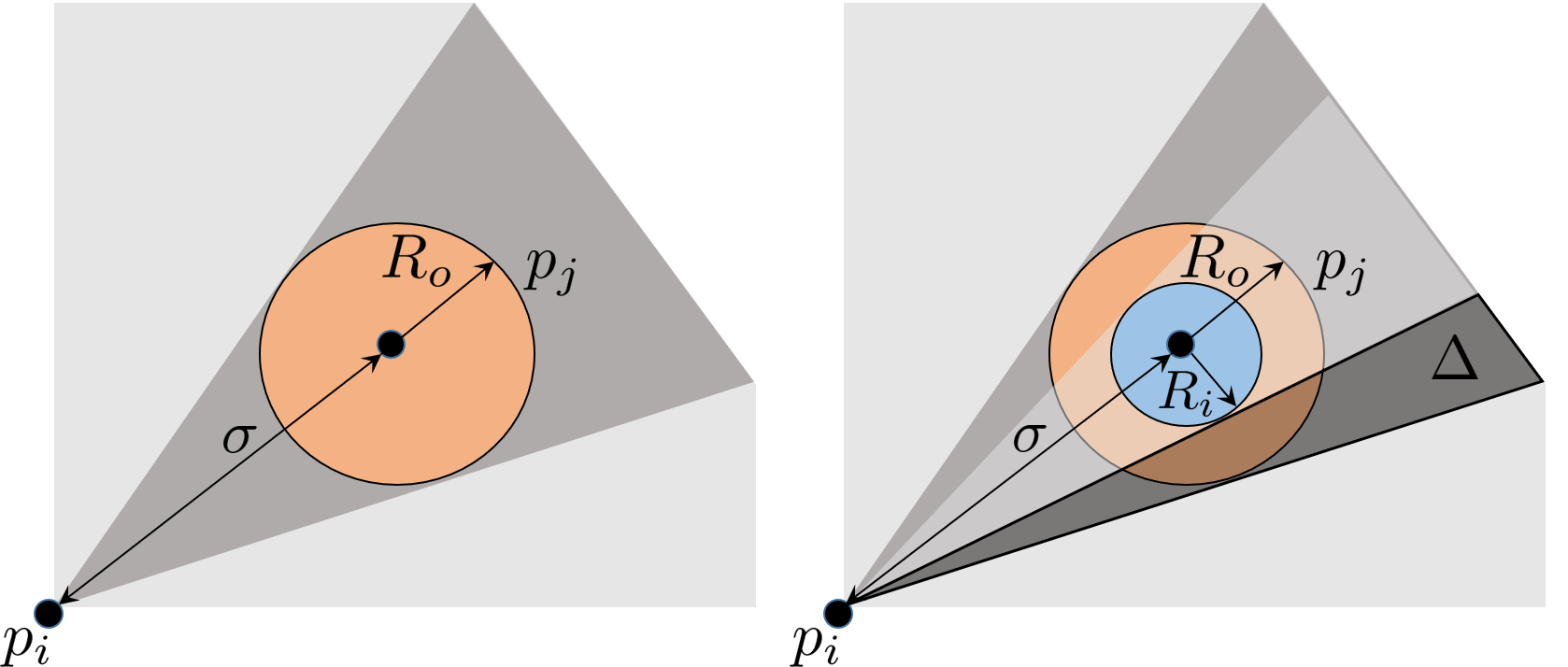}
    \caption{(Left) Standard VO configuration fundamental to RVO, using circular representations for top-view. (Right) RVO for front and elevated videos. The cyan circle and it's corresponding transparent gray cone represent the VO configuration for $\delta$-overlapping circles. RVO considers the highlighted area of the dark gray cone to be part of the collision area whereas it is a collision-free area since it is outside the transparent cone (VO of $\delta$-overlapping circles).}
    \label{fig:frvo}
\end{figure}

 

Our motion model is an extension of the RVO (Reciprocal Velocity Obstacle)~\cite{van2011reciprocal} approach, which can accurately model the trajectories of pedestrians using collision avoidance constraints. However, RVO models pedestrians using circular shapes which result in false overlaps in front- and elevated-view camera videos (Figure~\ref{fig:frvocomp}). A false overlap is essentially a false positive wherein RVO would signal a collision while the actual scene would not contain a collision. We show here that false overlaps cause the accuracy of RVO to drop by an error margin of $\Delta$.

\begin{definition}{\textbf{$\mathbf{\delta}$-overlap:}}
Two circles each of radius $r$ are said to $\delta$-overlap if the length of the line joining their centers is equal to $2r-\delta$, where $0<\delta<2r$.
\label{def}
\end{definition}

Now, consider the standard Velocity Obstacle formulation (the basis of RVO)~\cite{fiorini1998motionVO} in Figure~\ref{fig:frvo}~(left). From the VO formulation, we have $R_o = 2r$ and from \ref{def}, $R_i = 2r - \delta$. Let $\sigma$ denote the distance of $p_j$ from $\p$. Then from simple geometry,

\begin{equation*}
\begin{split}
0 \leq \Delta & \leq \sin^{-1} \textstyle \Big( \frac{2r}{\lVert\sigma\rVert}  \Big) -  \sin^{-1}  \textstyle \Big( \frac{2r-\delta}{\lVert\sigma\rVert}  \Big) \\
 & \leq \sin^{-1} \textstyle \Big( \frac{2r}{\lVert \sigma \rVert}\sqrt{1-\Big(\frac{\delta-2r}{\lVert \sigma \rVert}\Big)^2} + \frac{\delta-2r}{\lVert \sigma \rVert}\sqrt{1-\Big(\frac{2r}{\lVert \sigma \rVert}\Big)^2}  \Big) \\
\end{split}
\end{equation*}

\noindent $\Delta$ is upper-bounded by a function of $\delta$. Observe that the error bound increases for higher $\delta$. This is interpreted as follows: the more we increase the $\delta$-overlap, the transparent gray cone will correspondingly shrink, and the distance between the RVO generated velocity, and the ground truth velocity will increase.

\begin{figure}[t]
    \centering
    \includegraphics[width=\columnwidth]{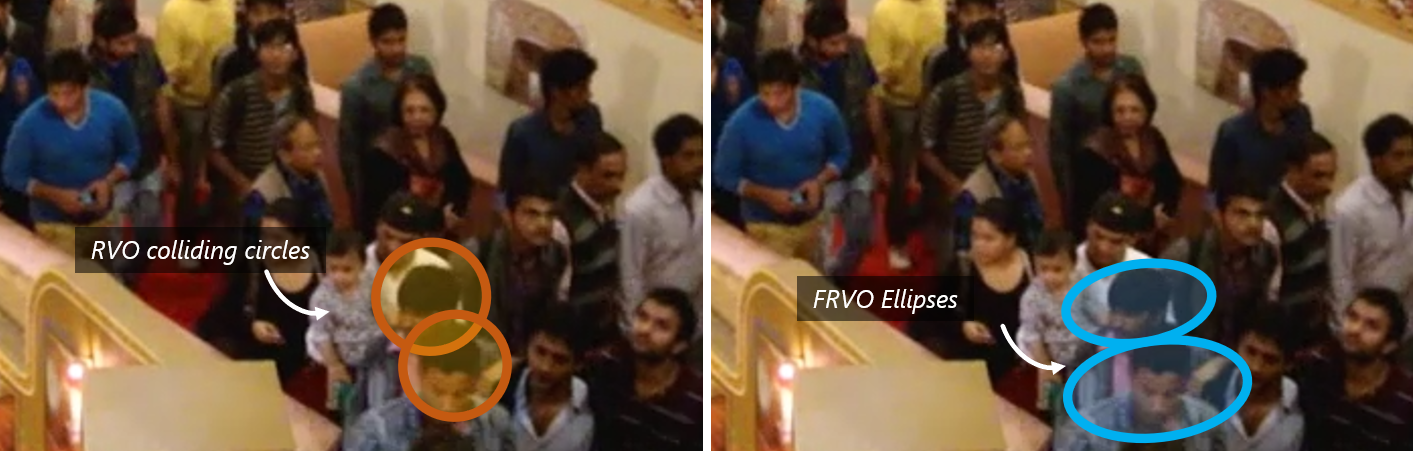}
    \caption{(\textit{left}) A circular representation results in many false $\delta$-overlaps.(\textit{right}) FRVO efficiently models pedestrians using elliptical representations that on average cause $\delta \rightarrow 0$. \textit{Sequence:} IITF-1.}
    \label{fig:frvocomp}
    \end{figure}

We therefore require geometric representations for which $\delta \rightarrow 0$. We propose to model each pedestrian using an elliptical pedestrian representation, with the ellipse axes capturing the height of the pedestrian's visible face and the shoulder length. We do not manually differentiate between the major and minor axes since ellipses can be tall or fat depending on the pedestrian orientation, and the particular axis to map either face length or shoulder width will vary. We observe that doing so produces an effective approximation. We only use the face and the shoulder in the FRVO formulation, because occlusion makes it difficult to observe other parts of the body in dense videos reliably. 

\subsection{Computing Predicted Velocities}

Each pedestrian $\p$ is represented using the following 8-dimensional state vector:

\[\Psi_{t} = \big[ x , v , v_{\tim{pref}} , {l} , {w} \big ],\]

\noindent where $x$, $v$ , and $v_{\tim{pref}}$ denote the current position of the pedestrian's center of mass, the current velocity, and the preferred velocity, respectively. $l$ is the height of the pedestrian's visible face, and $w$ captures the shoulder length. $v_{\tim{pref}}$ is the velocity the pedestrian would have taken in the absence of obstacles or colliding pedestrians, computed using the standard RVO formulation.

We assume the pedestrians are oriented towards their direction of motion. For each frame and each pedestrian, we construct the half-plane constraints for each of its neighboring pedestrians and obstacles to predict its motion. We use velocity obstacles to compute the set of permitted velocities for a pedestrian. Given two pedestrians $\p$ and $p_j$, the velocity obstacle of $\p$ induced by $p_j$, denoted by $VO_{\p|p_j}^{\tau}$ , constitutes the set of velocities for $\p$ that would result in a collision with $p_j$ at some time before $\tau$. By definition, pedestrians $\p$ and $p_j$ are guaranteed to be collision-free for at least time $\tau$, if ${{\vec{v}}_{\p}}-{{\vec{v}}_{p_j}}\notin VO_{\p|p_j}^{\tau}$. An pedestrian $\p$ computes the velocity obstacle $VO_{\p|p_j}^{\tau}$  for each of its neighboring pedestrians, $p_j$. The set of permitted velocities for a pedestrian $\p$ is simply the convex region given by the intersection of the half-planes of the permitted velocities induced by all the neighboring pedestrians and obstacles. We denote this convex region for pedestrian $\p$ and time horizon $\tau$ as ${FRVO}_{\p}^{\tau}$. Thus,

\[{FRVO}_{\p}^\tau = \bigcup \limits_{p_j \in \mc{H}_i} {VO}_{\p|p_j}^{\tau} \]
    

For each pedestrian $\p$, we compute the new velocity $v_{\text{new}}$ from ${FRVO}_{\p}^{\tau}$  that minimizes the deviation from its preferred velocity $v_{\text{pref}}$.
\begin{equation}
v_{\textrm{new}} = \underset{v}{\arg\max}\left\lVert v - v_{\textrm{pref}} \right\rVert
\end{equation}
such that $v \notin FRVO^\tau_{\p}$.

In order to compute collision-free velocities, we need to compute the Minkowski Sums of ellipses. In practice, computing the exact Minkowski sums of ellipses is much more expensive as compared to those of circles.
To overcome the complexity of exact Minkowski Sum computation, we compute conservative linear approximations of ellipses \cite{best2016real} and represent them as convex polygons. As a result, the collision avoidance problem reduces to linear programming. 
We use the predicted velocities to estimate the new position of a pedestrian in the next time-step.


\begin{figure}
  \centering
  \includegraphics[width =1.0 \linewidth]{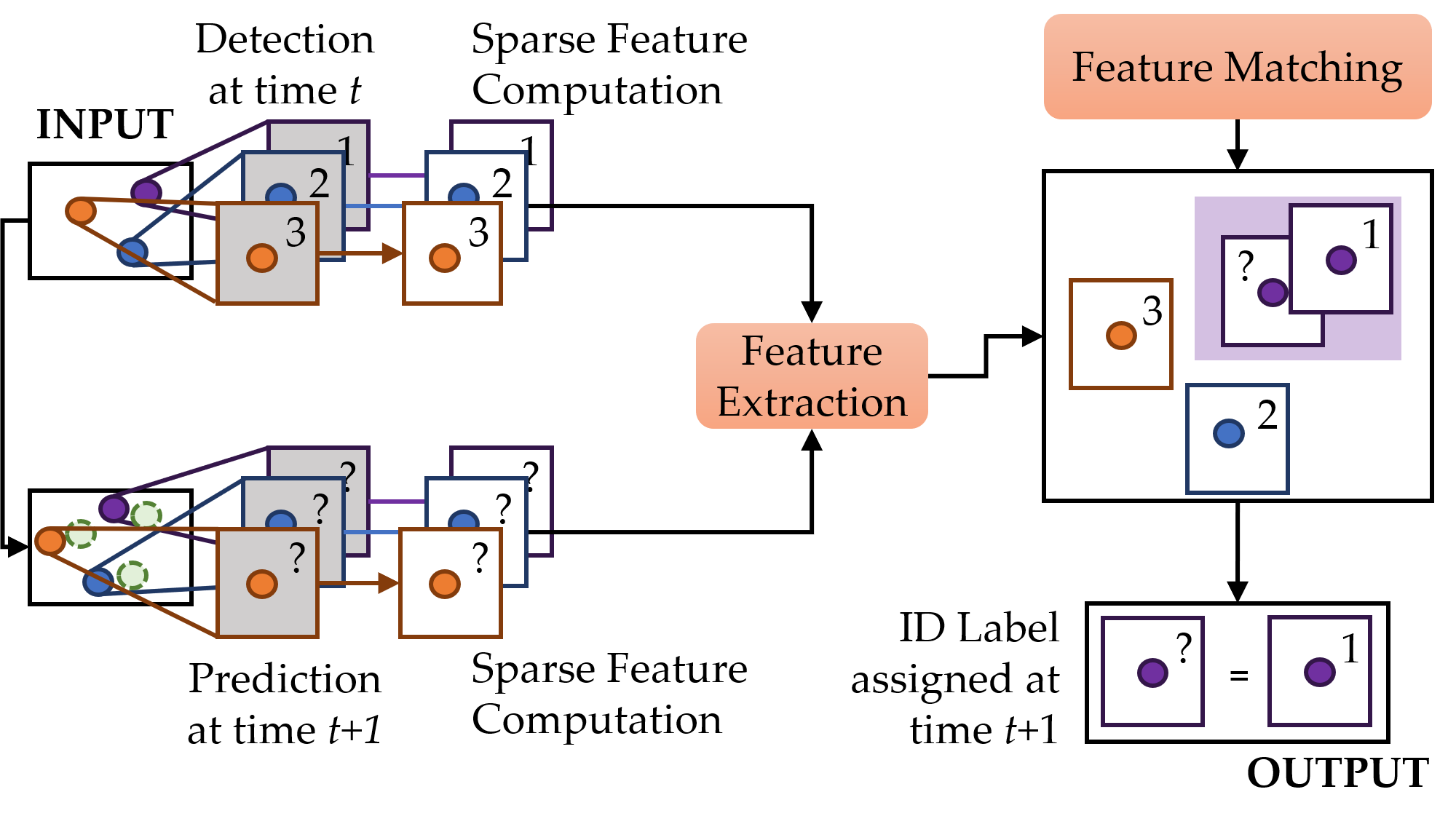}
  \caption{Overview of our real-time pedestrian tracking algorithm, DensePeds. An input frame at time $t$ undergoes pedestrian detection to generate segmented boxes using Mask R-CNN to compute sparse features. FRVO predicts pedestrian states for frame $t+1$ for which we also compute sparse feature. We extract features using a convolutional neural network, that are then matched using association algorithms. The predicted state receives a new ID depending on the detected person with which it is matched.}
  \label{fig:overview}
  
 \vspace{-20pt}
 \end{figure}

\section{DensePeds and Sparse Features}\label{Sec4}

In this section, we describe our tracking algorithm, DensePeds, which is based on the tracking-by-detection paradigm. Then, we present our approach for sparse feature extraction and show how they reduce the number of false negatives during tracking.

\subsection{DensePeds: Pedestrian Tracking Algorithm}
Our algorithm uses the detected segmentation boxes given by Mask R-CNN to extract the sparse features. These features are matched using the Cosine metric $d(\cdot, \cdot)$ (ref. the last row of Table~\ref{tab:notations}) and the Hungarian algorithm~\cite{kuhn2010hungarian} and are used to compute the position of each pedestrian during the next frame.  We illustrate our approach in Figure~\ref{fig:overview}. At current time $t$, given the ID labels of all pedestrians in the frame, we want to assign labels for pedestrians at time $t+1$. We start by using Mask R-CNN to implicitly perform pixel-wise pedestrian segmentation to generate segmented boxes, which are rectangular bounding boxes with background subtraction. Next, we predict the spatial coordinates of each person's bounding box for the next time-step using FRVO, as shown in Figure~\ref{fig:frvocomp} and Figure~\ref{fig:frvocomp}. This results in another set of bounding boxes for each pedestrian at time $t+1$.

We use these sets of boxes to compute appearance feature vectors, which are matched using an association algorithm \cite{kuhn2010hungarian}. This matching process is performed in two ways: the Cosine metric and the IoU overlap \cite{iou}. The Cosine metric is used to solve the following optimization problem to identify the detected pedestrian, $h_j$, that is most similar to $\p$. 
\begin{equation}
h^{*}_j = \argmin_{\h}(\mathnormal{d}(\xpi{f}, \xhj{f})| p_i \in \mc{P}, h_j \in \mc{H}_i).
\label{optim}
\end{equation}
The IoU overlap builds a cost matrix, $\zeta$ to measure the amount of overlap of each predicted bounding box with all nearby detection bounding box candidates. $\zeta(i,j)$ stores the IoU overlap of the bounding box of $\Psi_{t+1}$ with that of $\h$ and is calculated as:
\begin{equation}
    \zeta(i,j) = \dfrac{\bb{B}_{\p} \cap \bb{B}_{\h}}{\bb{B}_{\p} \cup \bb{B}_{\h}},\h \in \mc{H}_i.
\end{equation}
Matching a detection to a predicted measurement with maximum overlap thus becomes a max weight matching problem and we solve it efficiently using the Hungarian algorithm \cite{kuhn2010hungarian}. The ID of the pedestrian at time $t$ is then assigned to the best matched pedestrian at time $t+1$.

\subsection{Sparse Features}

In dense crowds, the bounding boxes of nearby pedestrians have significant overlap. This overlap adds noise to the feature vector extracted from the bounding boxes. We address this problem by generating sparse feature vectors by subtracting the noisy background from the original bounding box to produce segmented boxes, which is described next.

\subsubsection{Segmented Boxes Using Mask R-CNN}\label{Sec4.1.1}
We use Mask R-CNN to perform pixel-wise person segmentation. In practice, Mask R-CNN essentially segments out the pedestrian from its bounding box, and thereby reduces the noise that occurs when the pedestrians are nearby.

Mask R-CNN generates a bounding box and its corresponding mask for each detected pedestrian in each frame. We create a white canvas and superimpose a pixel-wise segmented pedestrian onto the canvas using the mask. 
We perform detection at current time $t$ and the output consists of bounding boxes, masks, scores, and class IDs of pedestrians. 

$\mc{B}= \{ \bb{B}_{\h} \ | \ \bb{B} = [\textrm{top left}, m, n], \h \in \mc{H} \}$ denotes the set of bounding boxes, where $\textrm{top left}, m, n$ denote the top left corner, width, height of $\bb{B}_j$, respectively. 

$\mc{M} = \{\bb{M}_{\h} \ | \ \h \in \mc{H}\}$ denotes the set of masks for each $\h$, where each $\bb{M}_{\h}$ is a $[m \times n]$ tensor of booleans. 

Let $\mc{W}= \{\bb{W}_{\h}(\cdot) \ | \ \h \in \mc{H} \}$ be the set of white canvases where each canvas, $\bb{W}_{\h} = [\mathbb{1}]_{m \times n}$, $w$ and $h$. Then, 
\[\mc{U} = \{ \bb{W}_{\h}(\bb{M}_{\h}) \ | \ \bb{W} \in \mc{W}, \bb{M} \in \mc{M}, \h \in \mc{H} \}, \]

\noindent is the set of segmented boxes for each $\h$ at time $t$. These segmented boxes are used by our real-time tracking algorithm shown in Figure~\ref{fig:overview}.

The segmented boxes are input to the DeepSORT CNN \cite{deepsort} to compute binary feature vectors. Since a large portion of a segmented box contains zeros, these feature vectors are mostly sparse. We refer to these vectors as our sparse feature vectors. Our choice of the network parameters governs the size of the feature vectors.


\begin{figure*}[t!]
\centering
\includegraphics[width=\textwidth]{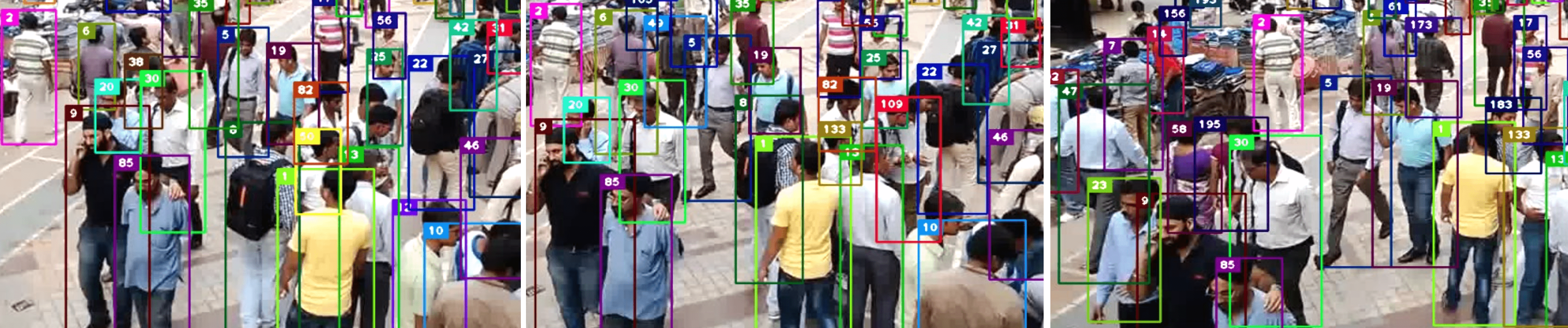}
  
  \vspace{-5pt}
  \caption{Qualitative analysis of DensePeds on the NPLACE-2 sequence consisting of 144 pedestrians in the video. Frames are chosen with a gap of 4 seconds($\sim$ 80 frames). Each bounding box color is associated with a unique ID (displayed on the top left corner of each bounding box). \textbf{Observation:} Note the consistencies in the ID (color), for example, for pedestrians 1 (green), 85 (purple) and 9 (dark red) (at the front, from right to left).}
\end{figure*}


\subsubsection{Reduced Probability of Track Loss}
We now show how sparse feature vectors generated from segmented boxes reduce the probability of the loss of pedestrian tracks (false negatives).




We define $\mc{T}_{t} =\{\Psi_{1:t}\}$ to be the set of positively identified states for $\p$ until time $t$. We denote the time since the last update to a track ID as $\mu$. We denote the ID of $\p$ as $\alpha$ and we represent the correct assignment of an ID to $\p$ as $\Gamma(\alpha)$. The threshold for the Cosine metric is $\lambda \underset{\tim{i.i.d.}}{\sim} \bb{U}[0,1]$. The threshold for the track age, \textit{i.e.}, the number of frames before which track is destroyed, is $\xi$. We denote the probability of an event that uses Mask R-CNN as the primary object detection algorithm with $\bb{P}^M(\cdot)$ and the probability of an event that uses a standard Faster R-CNN~\cite{fasterrcnn} as the primary object detection algorithm (\textit{i.e.}, outputs bounding boxes without boundary subtraction) with $\bb{P}^F(\cdot)$. Finally, $\mc{T}_{t} \gets \{\phi\}$ represents the loss of $\mc{T}_{t}$ by occlusion.

We now state and prove the following lemma.
\begin{lemma}
For every pair of feature vectors $(\msk, \fst)$ generated from a segmented box and a bounding box respectively, if $\lVert\msk\lVert_0 > \lVert\fst\rVert_0$, then $\mathnormal{d}(\xpi{f},\msk) < \mathnormal{d}(\xhj{f},\fst) $ with probability $1 - \frac{B}{A}$, where $A$ and $B$ are positive integers and $A > B$.
\label{lemma1}
\end{lemma}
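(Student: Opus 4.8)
The plan is to reduce the cosine-distance inequality to a coordinate-wise counting argument in which the segmented-box descriptor $\msk$ is viewed as the bounding-box descriptor $\fst$ with a block of ``background'' coordinates forced to zero. Since Mask R-CNN builds $\msk$ by masking out exactly the pixels that segmentation labels as background, and those are precisely the pixels that inject overlap/background noise into the descriptor, I would model the descriptor map so that the zero set of $\msk$ contains the zero set of $\fst$, the extra vanishing entries being the coordinates that encode background content; by hypothesis there are $\lVert\msk\rVert_0-\lVert\fst\rVert_0>0$ of them, where $\lVert\cdot\rVert_0$ counts vanishing entries. Writing $\mathnormal{d}(a,b)=1-\langle a,b\rangle/(\lVert a\rVert\,\lVert b\rVert)$ for the two relevant pairs $(\xpi{f},\msk)$ and $(\xhj{f},\fst)$, and treating $\xpi{f}$ and $\xhj{f}$ as realizations of the same clean reference descriptor of the tracked pedestrian (they differ only in being evaluated on the prediction versus the detection side), the numerator only ever sees the common support of the reference and the candidate, while the denominator sees the candidate's entire support. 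The comparison therefore collapses to the question: does deleting the background coordinates leave the numerator essentially unchanged while strictly shrinking the candidate's $\ell_2$ norm?

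I would first dispatch the idealized case. If the reference $\xpi{f}$, accumulated over the track $\mc{T}_t$, has its support concentrated on ``signal'' coordinates and hence disjoint from the background block removed in $\msk$, then $\langle\xpi{f},\msk\rangle=\langle\xpi{f},\fst\rangle$ while $\lVert\msk\rVert<\lVert\fst\rVert$, so $\mathnormal{d}(\xpi{f},\msk)<\mathnormal{d}(\xhj{f},\fst)$ holds outright. To cover the realistic case---where the removed block may clip an informative coordinate, or the reference may have leaked onto a background coordinate---I would place a uniform distribution over the admissible foreground/background partitions consistent with the observed $\lVert\cdot\rVert_0$ counts (equivalently, over which coordinate subset of the prescribed size gets zeroed), note that the inequality can fail only when that subset happens to contain one of the finitely many coordinates shared with $\xpi{f}$, and count. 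The total number of admissible configurations is a positive integer $A$ and the number of failing ones is a positive integer $B<A$, which yields the stated success probability $1-\tfrac{B}{A}$.

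The step I expect to be the \emph{genuine} obstacle is justifying the coordinate-wise decomposition at all: the DeepSORT CNN that produces these descriptors is nonlinear, so ``zeroing background pixels zeroes the corresponding descriptor coordinates'' and the support containment $\mathrm{supp}(\msk)\subseteq\mathrm{supp}(\fst)$ are modeling assumptions, not identities. Making them precise---via an approximate-linearity or receptive-field-locality assumption on the descriptor network, together with the assumption that the track-accumulated $\xpi{f}$ is (near) noise-free---is where the argument is most delicate, and it is the place where the positive integers $A$ and $B$ ultimately acquire their concrete values. Everything downstream, namely the monotonicity of $\mathnormal{d}(\xpi{f},\cdot)$ under support shrinkage and the finite combinatorial count, is then routine.
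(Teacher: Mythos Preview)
Your approach diverges from the paper's in two structural ways, and as a result does not recover the specific integers $A$ and $B$ that the statement names.

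First, your reading of the hypothesis is inverted. You take $\lVert\cdot\rVert_0$ to count \emph{vanishing} entries and model $\msk$ as $\fst$ with a background block zeroed out, giving $\mathrm{supp}(\msk)\subseteq\mathrm{supp}(\fst)$. The paper instead treats $\lVert\cdot\rVert_0$ as the standard $L_0$ count of non-zeros (so $\msk$ carries \emph{more} active coordinates than $\fst$), and it does \emph{not} assume any support containment: after reducing all three vectors to binary, the difference $\Delta f=\msk-\fst$ is allowed to take both $+1$ and $-1$ values. The integers in the lemma are then defined concretely as $A=\#\{k:\Delta f_k=+1\}$ and $B=\#\{k:\Delta f_k=-1\}$, and the hypothesis $\lVert\msk\rVert_0>\lVert\fst\rVert_0$ is exactly what forces $A>B$. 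Under your containment model $\Delta f$ would have no $+1$ entries at all, so the paper's $A$ would vanish; your ``total versus failing configurations'' count produces different numbers that you also call $A$ and $B$, but they are not the quantities the statement refers to.

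Second, the reduction itself is different. You keep both numerator and denominator of the cosine similarity in play and argue monotonicity under support shrinkage. The paper instead collapses the inequality $d(\xpi{f},\msk)<d(\xhj{f},\fst)$ directly to the sign condition $\xpi{f}^\top(\msk-\fst)>0$ (tacitly identifying $\xpi{f}$ with $\xhj{f}$ and absorbing the norms), reduces everything to binary, and lets $p_a$ and $p_b$ be the numbers of $(1,+1)$ and $(1,-1)$ coincidences between $\xpi{f}$ and $\Delta f$. Taking $p_a$ uniform on $(0,A)$ and $p_b$ uniform on $(0,B)$, it reads off $\bb{P}(p_a>p_b)=1-B/A$. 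Your norm-shrinkage story is arguably a cleaner heuristic for why masking helps, but it neither matches the paper's mechanism nor delivers the stated probability with the stated $A$ and $B$.
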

\begin{proof}
Using the definition of the Cosine metric, the lemma reduces to proving the following,

\begin{equation}
\xpi{f}^T(\msk - \fst) > 0
\label{diseq1}
\end{equation}

\noindent We pad both $\msk$ and $\fst$ such that $\lVert\msk\rVert_0 > ||\fst||_0$.

We reduce $\ft$, $\msk$, and $\fst$ to binary vectors, \textit{i.e.}, vectors composed of $0$s and $1$s. Let $\Delta f = \msk - \fst$. We denote the number of $1$s and $-1$s in $\Delta f$ as $A$ and $B$, respectively. Now, let $x$ and $y$ denote the $L_0$ norm of $\msk$ and $\fst$, respectively. From our padding procedure, we have $x>y$. Then, if $x=A$, and $y=B$, we trivially have $A>B$. But if $y>B$, then $A= x-(y-B) \implies A - B = x - y$. From $x>y$, it again follows that $A>B$. Thus, $x>y \implies A>B$.

Next, we define a $(1,1)$ coordinate in an ordered pair of vectors as the coordinate where both vectors contain $1$s. Similarly, a $(1,-1)$ coordinate in an ordered pair of vectors is the coordinate where the first vector contains $1$ and the second vector contains $-1$. Then, let $p_a$ and $p_b$ respectively denote the number of $(1,1)$ coordinates and $(1,-1)$ coordinates in the pair $(\ft,\Delta f)$. By definition, we have $0<p_a<A$ and $0<p_b<B$. Thus, if we assume $p_a$ and $p_b$ to be uniformly distributed, it directly follows that $\bb{P}(p_a>p_b) = 1-\frac{B}{A}$. 



\end{proof}

Based on Lemma~\ref{lemma1}, we finally prove the following proposition.
\begin{prop}
With probability $1-\frac{B}{A}$, sparse feature vectors extracted from segmented boxes decrease the loss of pedestrian tracks, thereby reducing the number of false negatives in comparison to regular bounding boxes.
\label{prop}
\end{prop}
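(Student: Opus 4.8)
The plan is to reduce the proposition to Lemma~\ref{lemma1} by translating the event ``a track is lost'' into a statement about the cosine-matching step, and then showing that the strictly smaller cosine distance guaranteed by Lemma~\ref{lemma1} strictly increases the per-frame probability that a track is continued, hence strictly decreases the probability that it is destroyed.

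First I would make the track-loss event precise. A track $\mc{T}_t$ is destroyed, i.e. $\mc{T}_t \gets \{\phi\}$, exactly when the correct association $\Gamma(\alpha)$ fails for $\xi$ consecutive frames, i.e. when the time-since-last-update counter $\mu$ reaches $\xi$. In a single frame, the association succeeds only if the best cosine distance from $\xpi{f}$ falls below the threshold $\lambda$. Since $\lambda \underset{\tim{i.i.d.}}{\sim} \bb{U}[0,1]$ and the cosine distance lies in $[0,1]$, the per-frame match probability equals $\bb{P}(\lambda > d(\xpi{f},\cdot)) = 1 - d(\xpi{f},\cdot)$. Next I would invoke Lemma~\ref{lemma1}: on the event (which has probability $1-\frac{B}{A}$) that $\lVert\msk\rVert_0 > \lVert\fst\rVert_0$ forces $\mathnormal{d}(\xpi{f},\msk) < \mathnormal{d}(\xhj{f},\fst)$, the per-frame match probability under Mask R-CNN, $1 - \mathnormal{d}(\xpi{f},\msk)$, strictly exceeds the per-frame match probability under Faster R-CNN, $1 - \mathnormal{d}(\xhj{f},\fst)$. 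Equivalently, the per-frame \emph{failure} probability is strictly smaller under Mask R-CNN.

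Then I would lift this per-frame inequality to the track-loss event. Using the independence of the thresholds across frames, the probability of a run of $\xi$ (or more carefully $\mu$) consecutive match failures factors as a product of the per-frame failure probabilities; since each factor is strictly smaller in the Mask R-CNN case, the product is strictly smaller, so $\bb{P}^M(\mc{T}_t \gets \{\phi\}) < \bb{P}^F(\mc{T}_t \gets \{\phi\})$ on the probability-$(1-\frac{B}{A})$ event. Finally, because a destroyed track of a pedestrian still present in the scene is by definition a false negative, lowering $\bb{P}(\mc{T}_t \gets \{\phi\})$ lowers the expected number of false negatives, which is the assertion; the ``$1-\frac{B}{A}$'' qualifier is inherited verbatim from Lemma~\ref{lemma1}.

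I expect the main obstacle to be the third step: cleanly justifying the factorization of the multi-frame failure probability into per-frame terms and the preservation of the product inequality. This needs the i.i.d.\ assumption on $\lambda$ together with the (mild) assumption that the relevant feature vectors, and hence the distances $\mathnormal{d}(\xpi{f},\msk)$ and $\mathnormal{d}(\xhj{f},\fst)$, may be treated as fixed over the short occlusion window — or at least that the per-frame stochastic dominance of $\bb{P}^M$ over $\bb{P}^F$ holds frame by frame. A secondary subtlety is that the single probability factor $1-\frac{B}{A}$ in the statement should not be multiplied by additional thresholds: I would handle this by treating the Lemma~\ref{lemma1} event as a conditioning event and carrying out the rest of the argument deterministically in terms of the match probabilities given that event.
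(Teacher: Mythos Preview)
Your proposal is correct and follows essentially the same route as the paper: invoke Lemma~\ref{lemma1}, use $\lambda \sim \bb{U}[0,1]$ to turn the cosine-distance inequality into a per-frame failure-probability inequality $\bb{P}^M(\alpha=\phi)<\bb{P}^F(\alpha=\phi)$, lift this to $\bb{P}^M(\mc{T}_t\gets\{\phi\})<\bb{P}^F(\mc{T}_t\gets\{\phi\})$, and then read off the reduction in false negatives from the definition of FN. The only difference is in the lifting step: the paper simply posits the equivalence $(\mu>\xi)\land(\alpha=\phi)\Leftrightarrow \mc{T}_t\gets\{\phi\}$ and passes the inequality through the conjunction in one line, whereas you unpack this as a run of $\xi$ consecutive failures and argue via a product of i.i.d.\ per-frame terms --- your version is more explicit (and arguably more honest about the independence assumption), but the underlying argument is the same.
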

\begin{proof}
In our approach, we use Mask R-CNN for pedestrian detection, which outputs bounding boxes and their corresponding masks. We use the mask and bounding box pair to generate a segmented box (Section~\ref{Sec4.1.1}). The correct assignment of an ID depends on successful feature matching between the predicted measurement feature and the optimal detection feature. In other words, 

\begin{equation}
\mathnormal{d}(\xpi{f},f_{h^{*}_j})>\lambda \Leftrightarrow (\alpha = \phi)
\label{eqn:assign}
\end{equation} 

\noindent Using Lemma~\ref{lemma1} and the fact that $\lambda \underset{i.i.d.}{\sim} \bb{U}[0,1]$, 
\[\bb{P}(\mathnormal{d}(\xpi{f} , f^M_{h^{*}_{j,\p}} ) > \lambda) < \bb{P}(\mathnormal{d}(\xpi{f} ,f^F_{h^{*}_{j,\p}} ) > \lambda)\] 
\noindent Using Eq.~\ref{eqn:assign}, it directly follows that,
\begin{equation}
    \bb{P}^M(\alpha = \phi) < \bb{P}^F(\alpha = \phi)
    \label{alpha}
\end{equation}

\noindent In our approach, we set \[(\mu>\xi) \land  (\alpha = \phi) \Leftrightarrow \mc{T}_{t} \gets \{\phi\}\]
Using Eq.~\ref{alpha}, it follows that,
\begin{equation}
\bb{P}^M(\mc{T}_{t} \gets \{\phi\}) < \bb{P}^F(\mc{T}_{t} \gets \{\phi\})
\label{prob}
\end{equation}
We define the total number of false negatives (FN) as 
\begin{equation}
    FN = \sum_{t=1}^T \sum_{p_g \in \mc{G}} \delta_{\mc{T}_{t}}
    \label{fn}
\end{equation}
where $p_g \in \mc{G}$ denotes a ground truth pedestrian in the set of all ground truth pedestrians at current time $t$ and $\delta_z = 1$ for $z=0$ and $0$ elsewhere. This is a variation of the Kronecker delta function.
Using Eq.~\ref{prob} and Eq.~\ref{fn}, we can say that fewer lost tracks ($\mc{T}_{t} \gets \{\phi\}$) indicate a smaller number of false negatives. 


\end{proof}

The upper bound, $\bb{P}^F(\mc{T}_{t})$, in Eq.~\ref{prob} depends on the amount of padding done to $\xpi{f}$ and $\xhj{f}$. A general observed trend is that a higher amount of padding results in a larger upper bound in Eq.~\ref{prob}.

\section{Experimental Evaluation}\label{Sec5}
\begin{table*}[!htb]
  \centering
  \begin{tabular}{|c|c|c|c|c|c|c|c|c|}
  \hline
    Sequence Name & Tracker & MT(\%)$\uparrow$ & ML(\%)$\downarrow$ & IDS$\downarrow$ & FN$\downarrow$ & MOTP(\%)$\uparrow$ & MOTA(\%)$\uparrow$ \\
    \hline
    \multirow{3}{*}{IITF-1} & MOTDT & \textbf{1.8} & 81.1 & 84 (0.2\%) & 10,050 (29.3\%) & \textbf{61.2} & 70.5 \\
    & MDP & \textbf{1.8} & 81.1 & \textbf{40 (0.1\%)} & 10,287 (30.0\%) & \textbf{61.2} & 69.9 \\
    \cline{2-8}
& \textbf{DensePeds} & \textbf{1.8} & \textbf{43.4} & 194 (0.5\%) & \textbf{8,357 (24.3\%)} & 60.7 & \textbf{75.1} \\
    \hline
    \multirow{3}{*}{IITF-2} & MOTDT  & 5.7 & 40.0 & 51 (0.3\%) & 3,902 (23.6\%) & 70.9 & 76.1 \\
& MDP & 14.2 & 25.7 & \textbf{36 (0.2\%)} & 3,033 (18.3\%) & \textbf{72.6} & 81.4 \\
    \cline{2-8}
    & \textbf{DensePeds} & \textbf{17.1} & \textbf{11.5} & 82 (0.5\%) & \textbf{2,308 (14.0\%)} & 70.3 & \textbf{85.5} \\
    \hline
    \multirow{3}{*}{IITF-3} & MOTDT  & 2.0 & 12.5 & 187 (0.5\%) & 7,563 (19.4\%) & 69.5 & 80.1 \\
    & MDP & 8.3 & 16.6 & \textbf{85 (0.2\%)} & 7,735 (19.8\%) & \textbf{71.0} & 79.9 \\
    \cline{2-8}
    & \textbf{DensePeds} & \textbf{22.9} & \textbf{4.2} & 204 (0.5\%) & \textbf{5,968 (15.3\%)} & 69.0 & \textbf{84.2} \\
    \hline
    \multirow{3}{*}{IITF-4} & MOTDT  & 11.4 & 17.2 & 83 (0.5\%) & 3,025 (19.2\%) & 68.2 & 80.2 \\
    & MDP & \textbf{14.2} & 28.5 & \textbf{41 (0.3\%)} & 3,129 (19.9\%) & \textbf{70.3} & 79.8 \\
    \cline{2-8}
    & \textbf{DensePeds} & \textbf{14.2} & \textbf{5.8} & 114 (0.7\%) & \textbf{2,359 (15.0\%)} & 67.6 & \textbf{84.3} \\
    \hline
    \multirow{3}{*}{NDLS-1} & MOTDT  & 0 & 78.3 & 113 (0.3\%) & 9,965 (28.9\%) & 61.8 & 70.8 \\
    & MDP & 0 & 89.1 & \textbf{72 (0.2\%)} & 10,293 (29.8\%) & \textbf{62.0} & 70.0 \\
    \cline{2-8}
    & \textbf{DensePeds} & 0 & \textbf{76.1} & 131 (0.4\%) & \textbf{9,728 (28.2\%)} & 61.5 & \textbf{71.4} \\
    \hline
    \multirow{3}{*}{NDLS-2} & MOTDT  & 0 & 85.2 & 142 (0.3\%) & 14,084 (28.9\%) & 63.0 & 70.8 \\
    & MDP & 0 & 85.1 & \textbf{82 (0.2\%)} & 13,936 (28.6\%) & \textbf{63.9} & 71.2 \\
    \cline{2-8}
    & \textbf{DensePeds} & 0 & \textbf{74.5} & 183 (0.4\%) & \textbf{13,240 (27.2\%)} & 62.7 & \textbf{72.5} \\
    \hline
    \multirow{3}{*}{NPLACE-1} & MOTDT & \textbf{1.9} & 64.7 & 181 (0.5\%) & 9,676 (27.7\%) & \textbf{63.3} & 71.8 \\
    & MDP & \textbf{1.9} & 64.7 & \textbf{153 (0.4\%)} & 9,629 (27.5\%) & 63.1 & 72.0 \\
    \cline{2-8}
    & \textbf{DensePeds} & \textbf{1.9} & \textbf{51.0} & 210 (0.6\%) & \textbf{9,114 (26.1\%)} & 62.8 & \textbf{73.3} \\
    \hline
    \multirow{3}{*}{NPLACE-2} & MOTDT & \textbf{3.8} & 67.3 & 83 (0.3\%) & 7,443 (27.4\%) & \textbf{63.2} & 72.3 \\
    & MDP & 5.7 & 69.2 & \textbf{54 (0.2\%)} & 7,484 (27.6\%) & 62.8 & 72.2 \\
    \cline{2-8}
    & \textbf{DensePeds} & \textbf{3.8} & \textbf{61.6} & 90 (0.3\%) & \textbf{7,161 (26.4\%)} & 62.8 & \textbf{73.3} \\
    \hline
    \multirow{3}{*}{Summary} & MOTDT  & 2.9 & 58.0 & 924 (0.4\%) & 65,708 (26.2\%) & 66.1 & 73.4 \\
    & MDP & 5.1 & 59.9 & \textbf{563 (0.2\%)} & 65,526 (26.1\%) & \textbf{67.4} & 73.7 \\
    \cline{2-8}
    & \textbf{DensePeds} & \textbf{7.0} & \textbf{43.4} & 1,208 (0.5\%) & \textbf{58,235 (23.2\%)} & 65.7 & \textbf{76.3} \\
    \hline
  \end{tabular}
  \caption{Evaluation on our dense crowds dataset with MOTDT~\cite{rtdl3} and MDP~\cite{xiang2015learning}. MOTDT is currently the best \textit{online} tracker on the MOT benchmark with open-sourced code. Bold is best. Arrows ($\uparrow, \downarrow$) indicate the direction of better performance. \textbf{Observation:} DensePeds improves the accuracy (MOTA) over the state-of-the-art by 2.6\%. We reduce the number of false negatives (FN) by 11\% compared to the next best method, which is a direct consequence of the theoretical formulation in Section~\ref{Sec3.2}.}
  \label{tab:compare_on_DPD}
\end{table*}

\begin{table}[!htb]
  \centering
  \resizebox{\columnwidth}{!}{%
  \scalebox{1.3}{
  \begin{tabular}{|c|l|c|c|c|c|c|c|c|}
  \hline
    & Tracker & Hz$\uparrow$ & MT(\%)$\uparrow$ & ML(\%)$\downarrow$ & IDS$\downarrow$ & FN$\downarrow$ & MOTP(\%)$\uparrow$ & MOTA(\%)$\uparrow$ \\
    \hline
    \multirow{5}{*}{\rotatebox{90}{MOT15}} & AMIR15 \cite{online152} & 1.9 & 15.8 & \textbf{26.8} & 1026 & 29,397 & 71.7 & 37.6 \\
    & HybridDAT \cite{online154}& 4.6 & 11.4 & 42.2 & 358 & 31,140 & 72.6 & 35.0 \\
    & AM \cite{rtdl4} & 0.5 & 11.4 & 43.4 & \textbf{348} & 34,848 & 70.5 & 34.3 \\
    & AP\_HWDPL\_p \cite{online151} & 6.7 & 8.7 & 37.4 & 586 & 33,203 & 72.6 & \textbf{38.5} \\
    \cline{2-9}
    & \textbf{DensePeds} & \textbf{28.9} & \textbf{18.6} & 32.7 & 429 & \textbf{27,499} & \textbf{75.6} & 20.0 \\
    \hline
    \multirow{8}{*}{\rotatebox{90}{MOT16}} & EAMTT\_pub \cite{eamtt} & 11.8 & 7.9 & 49.1 & 965 & 102,452 & 75.1 & 38.8 \\
    & RAR16pub \cite{rar} & 0.9 & 13.2 & 41.9 & 648 & 91,173 & 74.8 & 45.9 \\
    & STAM16 \cite{rtdl4} & 0.2 & 14.6 & 43.6 & \textbf{473} & 91,117 & 74.9 & 46.0 \\
    & MOTDT \cite{rtdl3} & \textbf{20.6} & 15.2 & 38.3 & 792 & 85,431 & 74.8 & \textbf{47.6} \\
    & AMIR \cite{online152} & 1.0 & 14.0 & 41.6 & 774 & 92,856 & \textbf{75.8} & 47.2 \\
    \cline{2-9} 
    & \textbf{DensePeds} & 18.8 & \textbf{20.3} & \textbf{36.1} & 722 & \textbf{78,413} & 75.5 & 40.9 \\
    \hline
  \end{tabular}
  }
  }
\caption{Evaluation on the MOT benchmark with non-anonymous, online methods. In the interest of a fair comparison, we have chosen methods that have an average rank either greater than or equal to our average rank. Bold is best. Arrows ($\uparrow, \downarrow$) indicate the direction of better performance. \textbf{Observation:} DensePeds is up to 4.5$\times$ faster than previous methods and improves accuracy by up to 17\%. Its overall performance has a rank of 23.5 (top 13\%) of all published methods on the MOT15 and 28.8 (top 20\%) on MOT16.}
  \label{tab:compare_on_MOT}
\end{table}


\begin{table}[!htb]
  \centering
\resizebox{\columnwidth}{!}{%
  \scalebox{1.2}{
    \begin{tabular}{|l|c|c|c|c|c|c|}
    \hline
    Detection & MT(\%)$\uparrow$ & ML(\%)$\downarrow$ & IDS$\downarrow$ & FN$\downarrow$ & MOTP(\%)$\uparrow$ & MOTA(\%)$\uparrow$ \\
    \hline
    BBox & 14.0 & 44.7 & \textbf{313} & 34,716 & \textbf{76.4} & 17.6 \\
    \hline
    \textbf{DensePeds} & \textbf{19.0} & \textbf{32.7} & 429 & \textbf{27,499} & 75.6 & \textbf{20.0} \\
    \hline
    
    \hline
    Motion Model & MT(\%)$\uparrow$ & ML(\%)$\downarrow$ & IDS$\downarrow$ & FN$\downarrow$ & MOTP(\%)$\uparrow$ & MOTA(\%)$\uparrow$ \\
    \hline
    Const. Vel & 0 & 98.0 & 33 & 82,467 & 64.2 & 67.1 \\
    SF & 0 & 96.7 & 145 & 81,936 & 64.5 & 67.3 \\
    RVO & 0 & 96.1 & 155 & 81,832 & 64.4 & 67.8 \\
    \hline
    \textbf{DensePeds} & \textbf{7.0} & \textbf{43.4} & 1,208 & \textbf{58,235} & \textbf{65.7} & \textbf{76.3} \\
    \hline
  \end{tabular}
  }
}
  \caption{Ablation studies where we demonstrate the advantage of Segmented Boxes and FRVO. We replace Segmented Boxes with regular Bounding Boxes (BBox) and compare its performance with DensePeds on the MOT benchmark. We also replace the FRVO model in turn with a constant velocity model (Const. Vel), the Social Forces model (SF)~\cite{helbing1995social} and the standard RVO model (RVO)~\cite{van2011reciprocal} and compare them on our dense crowd dataset. Bold is best. Arrows ($\uparrow, \downarrow$) indicate the direction of better performance. \textbf{Observation:} Using FRVO improves the MOTA by 13.3\% over the next best alternate. Using Segmented Boxes reduces the false negatives by 20.7\%.}
  \label{tab:ablations}
\end{table}
\subsection{Datasets}
We evaluate DensePeds on both our dense pedestrian crowd dataset and on the popular MOT~\cite{mot16} benchmark. MOT is a standard benchmark for testing the performance of multiple object tracking algorithms, and it subsumes older benchmarks such as KITTI~\cite{kitti} and PETS~\cite{pets}. When evaluating on MOT, we use the popular MOT16 and MOT15 sequences. However, the main drawback of the sequences in the MOT benchmark is that they do not contain dense crowd videos that we address in this paper.

Our dataset consists of 8 dense crowd videos where the crowd density ranges from 2 to 2.7 pedestrians per square meter. By comparison, the densest crowd video in the MOT benchmark has around 0.2 persons per square meter. All videos in our dataset are shot at an elevated view using a stationary camera. Seven of the eight videos are shot 1080p resolution, and one video is at 480p resolution.

While we agree that the current size of our dataset is prohibitively small for training existing deep learning-based object detectors, we showcase the results of our method on this dataset to validate our claims and analyses. We plan to expand and release a benchmark version of our dataset in the recent future.

\subsection{Evaluation Metrics and Methods}
\subsubsection{Metrics}
We use a subset of the standard CLEAR MOT metrics~\cite{clear} for evaluating the performance of our algorithm. Specifically, the metrics we use are:
\begin{enumerate}
\item \textbf{Number of mostly tracked trajectories (MT).} Correct ID association with a pedestrian across frames for at least 80\% of its life span in a video.
\item \textbf{Number of mostly lost trajectories (ML).} Correct ID association with a pedestrian across frames for at most 20\% of its life span in the video.
\item \textbf{False Negatives (FN).} Total number of false negatives in object detection (\textit{i.e.}, pedestrian tracks lost) over the video.
\item \textbf{Identity Switches (IDSW).} The total number of identity switches over pedestrians in the video.
\item \textbf{MOT Precision (MOTP).} The percentage misalignment between all the predicted bounding boxes and corresponding ground truth boxes over the video.
\item \textbf{MOT Accuracy (MOTA).} Overall tracking accuracy taking into account false positives (FP), false negatives (FN), and ID switches (IDSW). It is defined as $MOTA = 1-\dfrac{FP+FN+IDS}{GT}$, where $GT$ is the sum of annotated pedestrians in all the frames in the video.
\end{enumerate}

These are the most important metrics in CLEAR MOT for evaluating a tracker's performance. We exclude metrics that evaluate the performance of detection since object detection is not a contribution of this paper. We have also excluded the number of false positives from our tables since it is already included in the definition of MOTA and is not the focus of this paper. On the other hand, we highlight the results of false negatives to validate the theoretical formulation of our algorithm in Section~\ref{Sec3.2}.

\subsubsection{Methods} \label{Sec5.2.2}
There is a large number of tracking-by-detection methods on the MOT benchmark, and it is not feasible to evaluate our method against all of them in the limited scope of this paper. Further, many methods submitted on the MOT benchmark server are unpublished. Thus, to keep our evaluations as fair and competitive as possible, we compare with state-of-the-art online trackers on the MOT16 as well as MOT15 benchmarks that are published. By ``state-of-the-art'', we refer to methods that have an average rank greater or equal to our average rank.

For evaluation on our dataset, in addition to online state-of-the-art published methods, we also require their code. The only methods that meet all these criteria are MOTDT~\cite{rtdl3} and MDP~\cite{xiang2015learning}.


\subsection{Results}
All results generated by DensePeds are obtained without any training on tracking datasets. Thus, DensePeds does not suffer from over-fitting or dataset bias. All the methods that we compare with are trained on the MOT sequences.

\subsubsection{Our Dataset}
Table~\ref{tab:compare_on_DPD} compares DensePeds with MOTDT~\cite{rtdl3} and MDP~\cite{xiang2015learning} on our dense crowd dataset. MOTDT is currently the best online published tracker on the MOT16 benchmark with available code, and MDP is the second-best method with available code. We used their off-the-shelf implementations, with weights pre-trained on the MOT benchmark, to compare with DensePeds.

We observe that DensePeds produces the lowest false negatives of all the methods. Thus, by Proposition~\ref{prop} and Table~\ref{tab:compare_on_DPD}, DensePeds provably reduces the number of false negatives on dense crowd videos.

The low MT and high ML scores for the sequences are not an indicator of poor performance; rather they are a consequence of the strict definitions of the metrics. For example, in dense crowd videos, it is challenging to maintain a track for 80\% of a pedestrian's life span. DensePeds has the highest MT and lowest ML percentages of all the methods. This explains our high number of ID switches as a higher number of pedestrians being tracked would correspondingly increase the likelihood of ID switches. Most importantly, the MOTA of DensePeds is 2.9\% more than that of MOTDT and 2.6\% more than that of MDP. Roughly speaking, this is equivalent to an average rank difference of 19 from MOTDT and 17 from MDP on the MOT benchmark. We do not include the tracking speed metric (Hz) as shown in Table~\ref{tab:compare_on_MOT} since that is a metric produced exclusively by the MOT benchmark server.

\subsubsection{MOT Benchmark}
Although our algorithm is primarily targeted for dense crowds, in the interest of thorough analysis, we also evaluate DensePeds on sparser crowds. We select a wide range of methods (as described in Section~\ref{Sec5.2.2}) to compare with DensePeds and highlight its relative advantages and disadvantages in Table~\ref{tab:compare_on_MOT}. 

As expected, we do not outperform state of the art on the MOT sequences due to its sparse crowd sequences. We specifically point to our low MOTA scores which we attribute to our algorithm requiring a more accurate ground truth than the ones provided by the MOT benchmark. Our detection and tracking are highly sensitive --- they can track pedestrians that are too distant to be manually labeled. This erroneously leads to a higher count of false positives and reduces the MOTA. This was observed to be true for the methods we compared with as well. Consequently, we exclude FP from the calculation of MOTA for all methods in the interest of fair evaluation.

However, following Proposition~\ref{prop}, our method achieves the lowest number of false negatives among all the methods on the MOT benchmark as well. In terms of runtime, we are approximately 4.5$\times$ faster than the state-of-the-art methods on an NVIDIA Titan Xp GPU.

\subsubsection{Ablation Experiments}
In Table~\ref{tab:ablations}, we validate the benefits of sparse feature vectors (obtained via segmented boxes) in detection and FRVO in tracking in dense crowds with ablation experiments.

First, we replace segmented boxes with regular bounding boxes (boxes without background subtraction), and compare its performance with DensePeds on the MOT benchmark. In accordance with Proposition~\ref{prop}, DensePeds reduces the number of false negatives by 20.7\% on relative.

Next, we replace FRVO in turn with a constant velocity motion model~\cite{deepsort}, the Social Forces model~\cite{helbing1995social}, and the standard RVO model~\cite{van2011reciprocal}; all without changing the rest of the DensePeds algorithm. All of these models fail to track pedestrians in dense crowds (ML nearly 100\%). Note that a consequence of failing to track pedestrians is the unusually low number of ID switches that we observe for these methods. This, in turn, leads to their comparatively high MOTA, despite being mostly unable to track pedestrians. Using FRVO, we decrease ML by 55\% on relative. At the same time, our MOTA improves by a significant 12\% on relative and 8.5\% on absolute.

{\small
\bibliographystyle{ieeetr}
\bibliography{refs}
}
\end{document}